\newtheorem{thm}{Theorem}
\newtheorem{remark}{Remark}
\newtheorem{corollary}{Corollary}
\title{Towards Analyzing and Understanding the Limitations of DPO: A Theoretical Perspective}
\newcommand*\samethanks[1][\value{footnote}]{\footnotemark[#1]}
\author{Duanyu Feng$^{1,}$\thanks{Completed during the internship of Beijing Academy of Artificial Intelligence} \,, Bowen Qin$^{2}$, Chen Huang$^{1}$, Zheng Zhang$^{2,}$\thanks{Corresponding authors} \,, Wenqiang Lei$^{1,}$\samethanks\\
$^1$ Sichuan University\\
$^2$ Beijing Academy of Artificial Intelligence \\
\texttt{fengduanyu@stu.scu.edu.cn}, \texttt{huangc.scu@gmail.com}\\
\texttt{\{bwqin,zhangzheng\}@baai.ac.cn}, \texttt{wenqianglei@scu.edu.cn}}
\begin{document}

\maketitle

\begin{abstract}
 Direct Preference Optimization (DPO), which derives reward signals directly from pairwise preference data, has shown its effectiveness on aligning Large Language Models (LLMs) with human preferences. 
Despite its widespread use across various tasks, DPO has been criticized for its sensitivity to the SFT's effectiveness and its hindrance to the learning capacity towards human-preferred responses, leading to less satisfactory performance. To overcome those limitations, the theoretical understanding of DPO are indispensable but still lacking.
To this end, we take a step towards theoretically analyzing and understanding the limitations of DPO. Specifically, we provide an analytical framework using the field theory to analyze the optimization process of DPO.
By analyzing the gradient vector field of the DPO loss function, we find that the DPO loss function decreases the probability of producing human dispreferred data at a faster rate than it increases the probability of producing preferred data.
This provides theoretical insights for understanding the limitations of DPO discovered in the related research experiments, thereby setting the foundation for its improvement.
% This provides theoretical evidence for the phenomena discovered in the related research experiments, indicating that DPO may not be more effective than other methods and is dependent on the quality of initial LLMs and training data.
% Furthermore, we validate our theory through direct and downstream experiments. 
% We find that DPO depends on training data rather than answers generated by LLMs, which is currently the main drawback of DPO.
% In this paper, we offers a comprehensive reference guide for rethinking DPO, thereby setting the foundation for its improvement.
\end{abstract}

\section{Introduction}
%Large language models (LLMs) have shown impressive capabilities in understanding human instructions and generating desired responses \cite{touvron2023llama, openai2023gpt4}. 
Recent progress in instruction tuning~\citep{ouyang2022training,longpre2023flan} and human preference alignment~\citep{chung2022scaling} has enabled large language models (LLMs) to exhibit exceptional performance across a wide range of tasks~\citep{touvron2023llama, openai2023gpt4}. Specifically, LLMs that undergo supervised fine-tuning (SFT) across different tasks are anticipated to align with carefully curated human feedback and steer their response behavior accordingly. To achieve this, Direct Preference Optimization (DPO) has emerged as a popular and effective approach~\citep{rafailov2023direct}, which derives reward signals directly from pairwise preference data, thus bypassing the complexity of learning an additional reward model~\citep{christiano2017deep, bai2022constitutional}. 
In the context of DPO, a pairwise preference data takes the form of a triple $(x, y_w, y_l)$, comprising a specific prompt or question $x$, the human-preferred response $y_w$, and the dispreferred response $y_l$. These pairwise preference data are further used to increases the relative log probability of preferred to dispreferred responses, 
together with a Bradley-Terry preference model~\citep{bradley1952rank} based loss function.

% These capabilities are achieved through first supervised fine-tuning (SFT) of base LLMs for instruction tuning \cite{ouyang2022training,longpre2023flan}, and further be trained to align the human preferences of the responses \cite{chung2022scaling}.
% Currently, a common and popular method to align human preferences is Direct Preference Optimization (DPO) \cite{rafailov2023direct}. It is based on the preference model (Bradley-Terry model \cite{bradley1952rank}) to construct a loss function for the trained LLMs themselves after SFT. This allows for training LLMs directly using collected preference data and dispreference data, avoiding the complexity of learning an extra reward model as required by the previous Reinforcement Learning Human (or AI) Feedback approaches (RLHF/RLAIF) \cite{christiano2017deep,bai2022constitutional}.
Despite its widespread use across various tasks, the limitations of DPO are gradually coming to light, leading to less satisfactory performance as indicated by prior research~\citep{ethayarajh2023halos, xu2024contrastive}. Specifically, \textbf{DPO hinders the learning capacity of LLMs to generate human-preferred responses}, suggesting that LLMs after DPO tend to avoid producing human dispreferred responses but struggle to produce human-preferred responses, especially when training the LLM with the human-preferred response $y_w$ and the dispreferred response $y_l$ are literally similar~\citep{pal2024smaug}. 
Furthermore, \textbf{DPO has been criticized for its sensitivity to the SFT's effectiveness}~\citep{xu2024contrastive}. In other words, LLMs without proper and effective SFT typically exhibit subpar DPO performance. An empirical explanation for this is that SFT/instruction tuning are crucial for LLMs to comprehend and adhere to human directives before aligning with curated human feedback ~\citep{bai2022training}.
Despite these empirical observations, there is still a lack of theoretical analysis and understanding of the defects in DPO, which hinders insights into future directions for improving DPO.

In this paper, we take a step towards theoretically analyzing and understanding the limitations of DPO.
Focusing on the sensitivity to the SFT's effectiveness and the hindrance to the learning capacity of LLMs to generate human-preferred responses, we provide an analytical framework using the field theory~\citep{butcher2016numerical,mescheder2017numerics} to provide a comprehensive understanding of optimization process of DPO, which helps reveal the theoretical explanations behind the limitations in an unified manner. To achieve this, we begin with analyzing the gradient vector fields of DPO, which represents the direction and magnitude of the fastest decrease of the loss function of DPO over two variables: \textit{probabilities of generating human preferred and dispreferred data}, i.e., $\pi(y_w|x) \in [0,1]$ and $\pi(y_l|x) \in [0,1]$. This helps analyze how LLMs learn to steer their response behavior during the DPO optimization process. %
Remarkably, our findings suggest that \textbf{the DPO loss function decreases the probability of producing human dispreferred data at a faster rate than it increases the probability of producing preferred data}.
This provide two theoretical insights for understanding DPO's limitations: 
\begin{itemize}[leftmargin=*]
    \item \textit{Why does DPO hinder the learning capacity of LLMs to generate human-preferred responses}: In comparison to learning to generate human-preferred responses, the DPO loss function demonstrates a tendency for LLMs to readily learn to avoid generating responses that humans disprefer. This is due to the more significant impact of the DPO loss on $\pi(y_l|x)$ because of the larger gradient, as opposed to its effect on $\pi(y_w|x)$.
    \item \textit{Why is the DPO sensitive to the SFT's effectiveness}: The magnitudes and directions in various areas of the gradient vector field of DPO vary, suggesting that the practical optimization process of DPO is sensitive to the initial conditions of the alignment capability of LLMs after SFT, specifically $\pi(y_w|x)$ and $\pi(y_l|x)$. Consequently, in conjunction with the analysis of the first limitation, when SFT's effectiveness is slightly lacking, the slow increase in the probability of generating preferred responses causes the SFT-ed LLMs to struggle to align with human preferences.
\end{itemize}

In conclusion, this paper offers a theoretical analysis and comprehension of the limitations of DPO through an analytical framework employing field theory, particularly emphasizing the limitations regarding the sensitivity to the effectiveness of SFT and the impact on the ability to learn human-preferred responses. 
%\clearpage

\section{Preliminaries}
\label{sec:relwork}
\textbf{Human Preference Alignment}. The purpose of human preference alignment is to steer the response behavior of LLMs and align their responses with human preference. Formally, given a specific question or prompt $ x $, an aligned LLM should generate human-preferred response $ y_w $ with a greater probability than human-dispreferred one $ y_l $. To achieve this, there are two technological approaches: reinforcement learning based and non-reinforcement learning based methods. As a primary focus within the reinforcement learning approach, Reinforcement Learning Human (or AI) Feedback (RLHF/RLAIF)~\citep{christiano2017deep, bai2022constitutional} aims to aims to directly evaluate and optimize responses generated by LLM. These methods initially train a reward model (RM) to evaluate human preferences, where the reward model can be iteratively trained to improve its performance ~\citep{touvron2023llama}. 
Subsequently, RLHF/RLAIF establish a reinforcement learning framework for LLMs to learn an optimal or nearly-optimal policy that maximizes the reward from the reward model using Proximal Policy Optimization (PPO)~\citep{schulman2017proximal}. While this process significantly ensures the alignment effect of LLMs, the training complexity and convergence of PPO often present practical implementation challenges ~\citep{engstrom2020implementation}.

Consequently, non-reinforcement learning based methods have been proposed. For instance, researchers have suggested simplifying the computation of PPO through the use of Direct Preference Optimization (DPO)~\citep{rafailov2023direct} and its variance such as $f$-DPO~\citep{wang2023beyond} and Kahneman-Tversky Optimization (KTO)~\citep{ethayarajh2023halos}. Notably, DPO is the first method to eliminate the training phase of the reward model and reinforcement learning, instead directly employing the LLM itself to approximate the reward model and train itself using collected paired human preference and dispreference data.

% \newpage the probabilities of generating preferred and dispreferred responses are both low and 
\textbf{Limitations of DPO}. Researchers have found that several limitations hinder the utilization of DPO, may experiencing negative effects after DPO~\citep{ethayarajh2023halos, xu2024contrastive}. 
\begin{itemize}[leftmargin=*]
    \item Empirical evidence suggest that the effectiveness of DPO have strong reliance on the training effect of the LLMs after SFT~\citep{bai2022training, ouyang2022training, all2024anon}. Although existing efforts have tried to solve this limitation, for example, by introducing the contrastive preference optimization (CPO)~\citep{xu2024contrastive}, curriculum learning~\citep{xu2023contrastive}, and margin-enhanced loss function~\citep{amini2024direct, pal2024smaug, qiu2024rethinking}, the reason behind this limitation still lacks theoretical explanations. 
    \item Empirical evidence also suggest that LLMs, together with DPO, struggle to learn to generate responses that aligned with human preference~\citep{azar2023general}. This is particularly true when the edit distance of $y_w$ and $y_l$ in the same pairwise preference data are close~\citep{pal2024smaug}. Furthermore, \citet{azar2023general} try to analyze the loss of DPO via the KL-regularization of the LLM before and after the modification of DPO in its hidden reward model. They find that the strength of the KL-regularisation becomes weaker and weaker the more deterministic the preferences. However, their analysis focus on explain the limitation they have discovered, making it difficult to generalize to other limitations.
\end{itemize}

Therefore, there is an urgent need for a more comprehensive theoretical analysis of DPO. On one hand, this can deepen our understanding of the role of DPO in aligning with human preferences. On the other hand, we are attempting to unify the explanation of the current limitations of DPO from a higher perspective and indicate potential directions for improvement.

% \textbf{Although these methods addressed some issues of DPO, there is still a lack of exploration from a theoretical perspective, especially in terms of optimization. Our work aims to provide a deeper understanding of DPO from an optimization standpoint, to explain existing phenomena and to offer further improvement insights}. \add{you should rewrite this sentence and highlight your differences and contributions.}

% Meanwhile, $f$-DPO \cite{wang2023beyond}, Kahneman-Tversky Optimization (KTO) \cite{ethayarajh2023halos}, and Identity Preference Optimization (IPO) \cite{azar2023general} modify the DPO from the theory of Bradley-Terry model and practical aspects.

\section{Understanding the Limitations of DPO}
\label{sec:theory}
Previous studies have observed that DPO has been criticized for its sensitivity to the SFT's effectiveness and hinders the learning capacity of LLMs to generate human-preferred responses. In this section, we take a step towards theoretically analyzing and understanding the limitations of DPO using field theory.

\subsection{Analyzing the Loss of DPO}
\textbf{Re-formalizing DPO Loss Function}. Given a pairwise preference data $(x, y_w, y_l) \in D$, such as HH \cite{bai2022training} and SHP \cite{pmlr-v162-ethayarajh22a}, the purpose of DPO is to make the probability of LLMs generating human preference response $ y_w $ given $ x $, denoted as $ \pi_\theta (y_w|x) $ higher than the probability of generating human dispreference response $ y_l $, denoted as $ \pi_\theta (y_l|x) $, where $ \theta $ is the parameters of LLMs. Additionally, the DPO loss function introduces $ \pi_{ ref } $, which is the probability of the reference model (usually initiated as the $ \pi_\theta $), to compare the difference between the optimized LLMs and the reference model. According to the origin paper of DPO \cite{rafailov2023direct}, its loss can be written in the following form:
\begin{equation}
\begin{split}
    \mathcal{L}_{DPO}(\pi_\theta;\pi_{ref})=&-\mathbb{E}_{(x,y_w,y_l)\sim\mathcal{D}}\left[\log \sigma\left(\beta\log\frac{\pi_\theta(y_w|x)}{\pi_{ref}(y_w|x)}-\beta\log\frac{\pi_\theta(y_l|x)}{\pi_{ref}(y_l|x)}\right)\right]\\
    =&-\left[\log \sigma\left(\beta\log\frac{\pi_\theta(y_w|x)}{\pi_{ref}(y_w|x)}-\beta\log\frac{\pi_\theta(y_l|x)}{\pi_{ref}(y_l|x)}\right)\right] \\
    =& -\log\left(\frac{x^\beta_1}{x^\beta_1+x^\beta_2}\right),
\end{split}
 \label{eq:our_dpo}
\end{equation}
where $\beta$ is a hyper-parameter\footnote{Usually, $\beta\in[0.1, 0.5].$} and $\sigma$ is the sigmoid function. For easing the calculation, we denote $ \frac{ \pi_\theta (y_w|x) }{ \pi_{ ref } (y_w|x) } = x_1 $ and $ \frac{ \pi_\theta (y_l|x) }{ \pi_{ ref } (y_l|x) } =x_2 $. In this case, to minimize the loss, we could increase $x_1$ and decrease $x_2$. 

\textbf{Gradient Vector Field of DPO}. We calculate the respective derivatives of DPO loss function (i.e., Equation (\ref{eq:our_dpo})) regarding $x_1$ and $x_2$, respectively, and construct the corresponding gradient field $\nabla \mathcal{ L }_{ DPO } (x_1, x_2)  $ to visualize the optimization behavior of DPO, revealing the dynamic features of DPO in an intuitive way. 

\begin{thm}
The partial derivatives of Equation (\ref{eq:our_dpo}) with respect to $x_1$ and $x_2$ are given by:
\begin{equation}
\left\{
    \begin{aligned}
    &\frac{\partial\mathcal{L}_{DPO}(x_1;x_2)}{\partial x_1} &=& -\frac{\beta x^\beta_2}{x_1(x^\beta_1+x^\beta_2)},\\
    &\frac{\partial\mathcal{L}_{DPO}(x_1;x_2)}{\partial x_2} &=& \frac{\beta x^{\beta-1}_2}{x^\beta_1+x^\beta_2}.
\end{aligned}
    \right.
\label{eq:g_dpo}
\end{equation}
\label{thm1}
\end{thm}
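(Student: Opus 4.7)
The plan is to treat this as a direct computation of partial derivatives, exploiting the logarithmic structure of the loss to avoid messy quotient-rule expansions. My first step is to split the logarithm in
\[
\mathcal{L}_{DPO}(x_1,x_2) = -\log\!\left(\frac{x_1^\beta}{x_1^\beta+x_2^\beta}\right)
\]
into $-\beta\log x_1 + \log(x_1^\beta+x_2^\beta)$, which turns both partial derivatives into sums of elementary terms and removes the need for the quotient rule.

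Next I would differentiate term by term. For $\partial/\partial x_1$, the first piece contributes $-\beta/x_1$ and the second contributes $\beta x_1^{\beta-1}/(x_1^\beta+x_2^\beta)$ by the chain rule applied to $\log$ composed with $x_1^\beta+x_2^\beta$. For $\partial/\partial x_2$, only the second piece depends on $x_2$, giving $\beta x_2^{\beta-1}/(x_1^\beta+x_2^\beta)$ directly, which already matches the stated formula.

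The only nontrivial step is the algebraic consolidation of the $x_1$-derivative into the claimed compact form. I would place the two contributions over the common denominator $x_1(x_1^\beta+x_2^\beta)$, obtaining $\bigl(-\beta(x_1^\beta+x_2^\beta) + \beta x_1^\beta\bigr)/\bigl(x_1(x_1^\beta+x_2^\beta)\bigr)$; the $x_1^\beta$ terms cancel, leaving $-\beta x_2^\beta/(x_1(x_1^\beta+x_2^\beta))$, which is exactly the first equation in \eqref{eq:g_dpo}.

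Since the argument is a routine differentiation, there is no real obstacle; the only thing to watch is sign bookkeeping in front of the outer logarithm and the cancellation in the numerator of the $x_1$-derivative. I would also note implicitly the domain assumption $x_1,x_2>0$ (inherited from $x_1,x_2$ being ratios of probabilities), so that $\log$ and fractional powers are well defined and differentiation under these operations is justified.
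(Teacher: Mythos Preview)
Your proposal is correct and follows essentially the same approach as the paper: both compute the partial derivatives directly via elementary calculus. The only cosmetic difference is that you first split the logarithm as $-\beta\log x_1 + \log(x_1^\beta + x_2^\beta)$ before differentiating, whereas the paper applies the chain and quotient rules to the original quotient form for the $x_1$-derivative; your variant is slightly cleaner, but the two computations are equivalent.
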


\begin{proof}
    We leave the detailed proof in Appendix \ref{ap:po1}.
\end{proof}

\begin{corollary}
For each pairwise preference data $ (x, y_w, y_l) \in D $, the update rate of $ x_1 $ in $ \mathcal { L } _ { DPO } (x_1, x_2) $ with respect to $ x_2 $, which represents the ratio of the increase in the probability of a human-preferred response to the decrease in the probability of a human-dispreferred response, is $ \frac { x_2 } { x_1 } $.
\begin{equation}
   \left| \frac{\partial\mathcal{L}_{DPO}(x_1;x_2)}{\partial x_1}/\frac{\partial\mathcal{L}_{DPO}(x_1;x_2)}{\partial x_2}\right| = \frac{ x_2 }{ x_1 }.
   \label{eq:cor1}
\end{equation}
\label{cor1}
\end{corollary}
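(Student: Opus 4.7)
The plan is to prove Corollary~\ref{cor1} by direct substitution of the expressions from Theorem~\ref{thm1} into the ratio on the left-hand side of Equation~(\ref{eq:cor1}) and then simplifying algebraically. Since the statement is purely a consequence of the two partial-derivative formulas already established, no new ideas are needed; the argument is one line of calculation, plus a brief justification for why the absolute value removes the only sign that appears.

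First I would recall from Theorem~\ref{thm1} that $\partial \mathcal{L}_{DPO}/\partial x_1 = -\beta x_2^\beta / \bigl(x_1(x_1^\beta + x_2^\beta)\bigr)$ and $\partial \mathcal{L}_{DPO}/\partial x_2 = \beta x_2^{\beta-1}/(x_1^\beta + x_2^\beta)$. Then I would form the quotient of these two expressions; the common factor $\beta/(x_1^\beta+x_2^\beta)$ cancels immediately, leaving $-x_2^\beta/(x_1 \cdot x_2^{\beta-1})$, which collapses to $-x_2/x_1$. Taking absolute values and using that $x_1,x_2 > 0$ (since each $x_i$ is a ratio of probabilities $\pi_\theta(\cdot\mid x)/\pi_{\mathrm{ref}}(\cdot\mid x)$, both of which are strictly positive for well-defined language-model policies) gives the claimed value $x_2/x_1$.

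Before concluding, I would add one sentence noting the implicit domain assumptions that make the ratio well-defined: namely $x_1 > 0$ (so the denominator in $\partial\mathcal{L}_{DPO}/\partial x_1$ is nonzero) and $x_2 > 0$ (so $\partial\mathcal{L}_{DPO}/\partial x_2 \neq 0$, allowing division). These are precisely the conditions under which both partial derivatives in Theorem~\ref{thm1} are defined, so no additional hypotheses are required.

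There is no substantive obstacle here; the only thing to be careful about is bookkeeping the sign (the derivative with respect to $x_1$ is negative, reflecting that increasing $x_1$ decreases the loss) and confirming that the exponents $x_2^\beta / x_2^{\beta-1}$ simplify to $x_2^1$. The interpretive claim in the corollary statement, that this ratio represents the relative update rate of the human-preferred probability versus the dispreferred probability, is then a matter of reading off the gradient components and does not require further proof.
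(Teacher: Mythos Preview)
Your proposal is correct and follows exactly the approach implicit in the paper: the corollary is stated immediately after Theorem~\ref{thm1} without a separate proof, precisely because it is obtained by dividing the two partial-derivative expressions in Equation~(\ref{eq:g_dpo}) and taking absolute value. Your bookkeeping of the cancellation $\beta/(x_1^\beta+x_2^\beta)$, the simplification $x_2^\beta/x_2^{\beta-1}=x_2$, and the positivity of $x_1,x_2$ is all in order.
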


\begin{remark}
\label{remark1}
    For any pairwise preference data, the update rate $x_2/x_1 < 1$ holds.
\end{remark}
Given that $x_1= \frac{\pi_\theta(y_w|x)}{\pi_{ref}(y_w|x)}$ and $x_2= \frac{\pi_\theta(y_l|x)}{\pi_{ref}(y_l|x)}$ are two probability ratios, where $\pi_\theta(y|x) \in [0,1]$ and $\pi_{ref}(y|x) \in [0,1]$. Assuming $\pi_{ref}(y|x)$ is the probability of the fixed reference model, we can assume $\pi_{ref}(y_w|x)=\frac{1}{a}$ and $\pi_{ref}(y_l|x)=\frac{1}{b}$, where $(a,b \geq 1)$. In this case, we have $x_1 \in [0,a]$ and $x_2 \in [0,b]$. As the DPO optimization progresses, $x_1$ tends to increase and $x_2$ tends to decrease. Consequently, $\pi_\theta(y_w|x)$ will be greater than $\frac{1}{a}$, and $\pi_\theta(y_l|x)$ will be smaller than $\frac{1}{b}$. In other words, this implies that $x_1 = \frac{\pi_\theta(y_w|x)}{\pi_{ref}(y_w|x)}$ is greater than 1, $x_2 = \frac{\pi_\theta(y_l|x)}{\pi_{ref}(y_l|x)}$ is less than 1, and therefore $x_2 < x_1$.

\subsection{Analyzing the Optimization Process of DPO}
This section aims to investigate the impact regarding generation probabilities of preference and dispreference data, we visualize the optimization plane (loss landscape) and gradient field $ \nabla \mathcal { L } _ { DPO } (x_1, x_2) =\left( \frac { \partial \mathcal { L } _ { DPO } (x_1;x_2) } { \partial x_1 } , \frac { \partial \mathcal { L } _ { DPO } (x_1;x_2) } { \partial x_2 } \right)$ in Figure \ref{fig:opt}. Since $ \pi _ { ref } (y_w|x) $ and $ \pi _ { ref } (y_l|x) $ are constants determined by the initial reference model, which may cause stretching or compression of the figure, rather than causing formal changes, we omit the denominators in $x_1$ and $x_2$\footnote{We set both $ \pi _ { ref } (y_w|x) $ and $ \pi _ { ref } (y_l|x) $ equal to 1, which simulates the situation where the reference model is absent \cite{xu2024contrastive}.}. We interpret Figure \ref{fig:opt} in various scenarios, specifically when $x_1$ is extremely large or very small, and when $x_2$ is extremely large or very small.

\begin{figure}[htbp]
  \centering  %图片全局居中
  \subfigbottomskip=2pt %两行子图之间的行间距
  \subfigcapskip=-5pt %设置子图与子标题之间的距离
  \subfigure[The optimization plane (loss landscape) of DPO]{
    \includegraphics[scale=0.45]{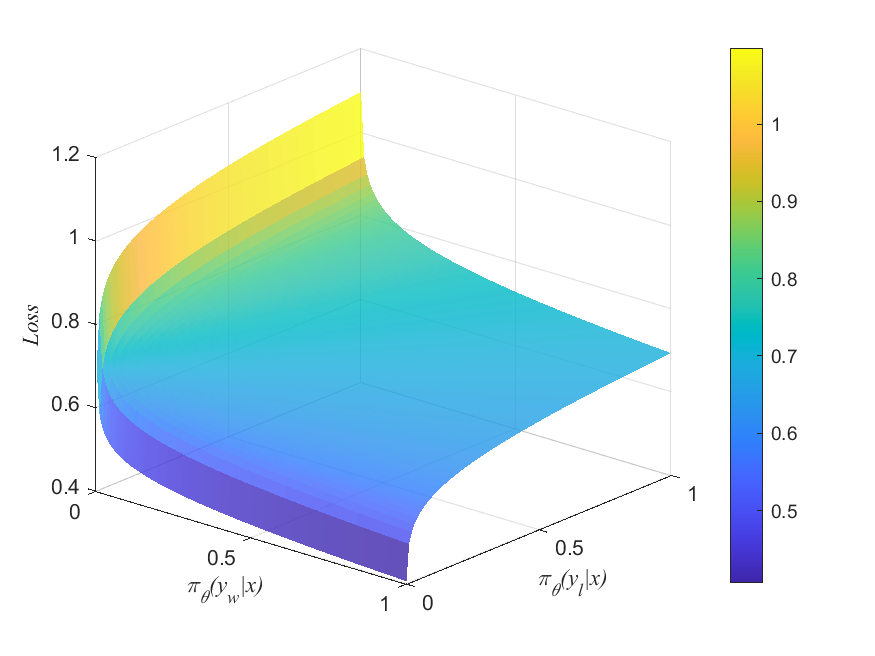}
        \label{fig:opt1}}
  \subfigure[The gradient field of DPO]{
    \includegraphics[scale=0.45]{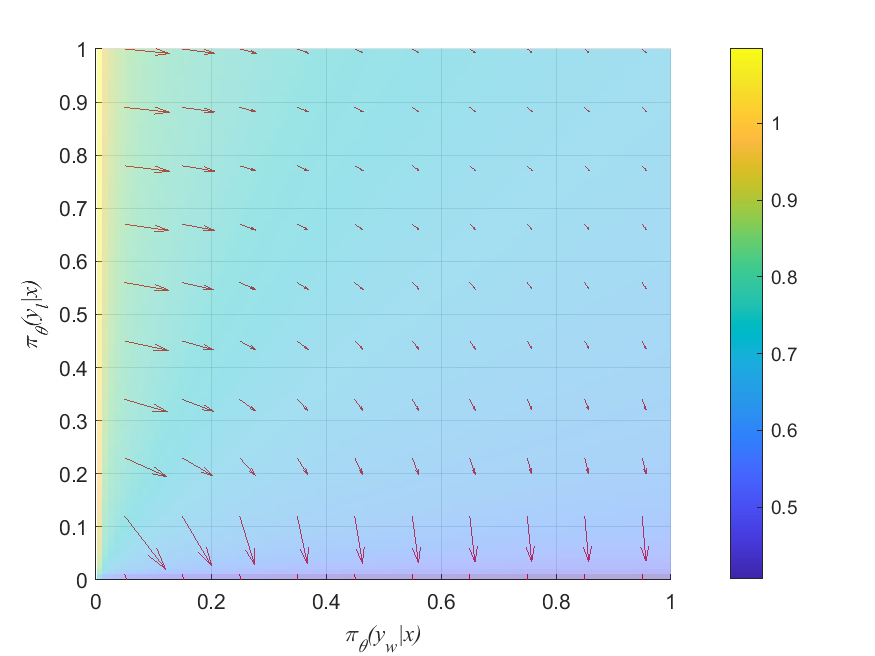}
        \label{fig:opt2}}
  
  \caption{The optimization plane (loss landscape) and gradient field of DPO. Figure (a) illustrates the values of DPO loss under different probabilities of generating prefer and disprefer responses, known as the optimization plane (loss landscape) of DPO. Figure (b) provides a top-down view of the optimization plane (loss landscape) and incorporates the gradient field at different positions using red arrows. The direction of the red arrows represents the gradient-based optimization direction, while the length of the red arrows represents magnitudes.}
  % \add{too small font size in the figures}
 \label{fig:opt}
\end{figure}

As depicted in Figure \ref{fig:opt}, the gradient vector field vanishes in the area of low $x_1$ and then moves away, but it converges towards the region of low $x_2$ to vanish there. Consequently, the optimization objective of DPO facilitates LLM in learning how to produce responses aligning with human preferences and refraining from generating responses that do not align with human preferences. However, the magnitudes in different areas of the gradient space vary, which influences the practical optimization process of DPO. In this section, we highlight the following features  of the gradient field, which imply that DPO might be sensitive to the initial conditions of variables $x_1$ and $x_2$, which reflect the potential reliance on the alignment capability of LLMs after SFT.

\begin{itemize}[leftmargin=*]
    \item When $x_1$ is extremely small and $x_2$ is extremely large (this mainly occurs in the initial stages of optimization), as depicted in the top left corner of Figure \ref{fig:opt2}, the LLMs essentially lack the capability to produce preferred responses and tend to generate non-preferred responses. In this scenario, the gradient flow of DPO tends to rapidly increase $x_1$ while causing only minor changes to $x_2$.
    \item When both $x_1$ and $x_2$ are extremely large (this also mainly occurs in the initial stages of optimization), as illustrated in the top right corner of Figure \ref{fig:opt2}, the LLMs are capable of producing both preferred and non-preferred responses with large probabilities. In this scenario, the gradient flow of DPO tends to concurrently increase $x_1$ and decrease $x_2$, but the overall change is relatively minor, potentially resulting in difficulty escaping saddle points.
    % \add{Is this a local minimal? and why both $x_1$ and $x_2$ could be extremely large at the same time?}
    \item When $x_2$ is extremely small (this may occur in the any stages of optimization), as depicted in the lower part of Figure \ref{fig:opt2}, indicating that the LLMs have limited capability to generate both preferred and non-preferred responses, the gradient flow of DPO tends to rapidly decrease $x_2$ while causing only minor changes to $x_1$.
\end{itemize}

\subsection{Limitation Analysis}
Expanding on our previous findings, this section seeks to offer a detailed analysis of the limitations of DPO, setting the foundation for its improvement.

% According to our earlier findings, the DPO loss function shows a tendency for easily learning how to avoid generating responses that humans disprefer. \textbf{This first explains the high-quality requirements of DPO for SFT. DPO requires a good initial state as an optimization starting point to weaken the impact of the slow learning speed of preference data. Furthermore, due to the correlation between preference data and dispreference data, our theory can also provide further explanation for the phenomenon of negative optimization of preference data mentioned in }\cite{pal2024smaug}. 
\subsubsection{Limitation 1: Hindrance to the learning capacity towards human-preferred responses}
Empirical evidence indicates that LLMs, in conjunction with DPO, encounter challenges in learning to produce responses that align with human preference. In the following section, our theoretical findings further support this empirical evidence.

\begin{remark}
\label{remark2}
    Compared to learning to generate human-preferred responses, the DPO loss function shows a tendency for LLMs to easily learn how to avoid generating responses that humans disprefer due to the more significant impact of the DPO loss on $x_2$.
\end{remark}

According to our Remark \ref{remark1}, the impact of the DPO loss on $x_2$ is more significant due to the larger gradient $\frac{\partial\mathcal{L}_{DPO}(x_1;x_2)}{\partial x_2}$ compared to the impact on $x_1$, which has a smaller gradient $\frac{\partial\mathcal{L}_{DPO}(x_1;x_2)}{\partial x_1}$. As $x_1$ tends to increase and $x_2$ tends to decrease during optimization, we have $ \frac { x_2 } { x_1 } \rightarrow 0 $. At this point, DPO focuses more on updating $x_2$ to approach 0, while making minimal updates to $x_1$ (due to larger gradient $\frac{\partial\mathcal{L}_{DPO}(x_1;x_2)}{\partial x_2}$). In other words, DPO concentrates excessively on indicating to LLMs what constitutes a poor response, while neglecting to guide LLMs on what constitutes a good response that aligns with human preference. Informally, in extreme scenarios, if the human-preferred response $y_w$ and the dispreferred response $y_l$ are literally similar, the gradient with respect to $x_2$ would counteract the gradient of $x_1$ to some extent, thereby weakening the optimization toward $x_1$ and leading to the hindrance to the learning capacity towards human-preferred responses.

\subsubsection{Limitation 2: Sensitivity to SFT's effectiveness}
While SFT has become one of crucial techniques for aligning LLMs with human language, LLMs following SFT may demonstrate differing levels of alignment as a result of factors such as data quality and training strategies. As evidenced by previous studies, the effectiveness of DPO is dependent on the alignment capability of LLMs following SFT, and subpar SFT may result in a reduction of LLM effectiveness after DPO \cite{xu2024contrastive, ethayarajh2023halos}. In the following section, we offer theoretical explanations for this limitation. To start, we uncover characteristics when handling LLMs with various initial positions within the gradient field of DPO.

\begin{itemize}[leftmargin=*]
    \item When the initial position of LLMs is situated at the lower right corner of Figure \ref{fig:opt2}, the focus of DPO shifts to reducing the probability of generating dispreferred responses. In this situation, DPO demonstrates its ability to rapidly diminish this probability and prevent LLMs from generating responses that are dispreferred by humans.
    \item When the initial position of LLMs is situated at the left side of Figure \ref{fig:opt2}, DPO's focus shifts to enhancing the probability of generating human-preferred responses. However, DPO may not be able to swiftly increase this probability, as the gradient direction favors optimizing $x_2$.
\end{itemize}

% \add{remark 3 and next paraphrase}
\begin{remark}
    The alignment capability of SFT-ed LLMs may significantly impact DPO, leading to bad initial states for LLM in the optimization plane (loss landscape) of DPO.
     % The initial training state of DPO is influenced by the effectiveness of SFT, which results in a strong dependency of DPO and SFT.
\end{remark}
The initial states in the gradient vector field have a significant impact on the final optimization results. As depicted in Figure \ref{fig:opt}, the optimization plane (loss landscape) and gradient field of DPO in different regions can drive LLM to different optimization results, potentially leading to instability. 
In such instances, LLMs that have not undergone satisfactory SFT often exhibit limited proficiency in effectively adhering to instructions and responding to human queries. 
The initial positioning of these SFT-ed LLMs may be situated in the lower-left corner of Figure \ref{fig:opt2}, indicating low probabilities for generating both preferred and dispreferred responses, and a gradient direction that does not entirely prioritize the enhancement of human-preferred response probabilities. Alternatively, the initial positioning of these SFT-ed LLMs may be in the upper-right corner of Figure \ref{fig:opt2}. In this scenario, the presence of very small gradients in the upper-right corner can lead to sluggish convergence and challenges in escaping local minima. Consequently, this can result in inadequate learning of human preference data.

\section{Conclusion}
\label{sec:con}

In this paper, we focus on offering a theoretical analysis and comprehension of the limitations of DPO through an analytical framework employing field theory. 
By analyzing the gradient vector fields of DPO, we find that the DPO loss function decreases the probability of producing human dispreferred data at a faster rate than it increases the probability of producing preferred data. 
This finding can be explained from a unified perspective of DPO regarding the sensitivity to the effectiveness of SFT and the hindrance to the learning capacity of LLMs in generating human-preferred responses.
In the future, we will conduct experiments to validate our theory and make improvements to DPO based on our finding.

\bibliographystyle{plainnat}
\bibliography{references}

%%%%%%%%%%%%%%%%%%%%%%%%%%%%%%%%%%%%%%%%%%%%%%%%%%%%%%%%%%%%

\appendix
\section{The Proof of Theorem \ref{thm1}}
\label{ap:po1}

\begin{thm}
The partial derivatives of $\mathcal{L}_{DPO}(x_1,x_2)=-\log(\frac{x^\beta_1}{x^\beta_1+x^\beta_2})$ with respect to $x_1$ and $x_2$ are given by:
\begin{equation}
\left\{
    \begin{aligned}
    &\frac{\partial\mathcal{L}_{DPO}(x_1;x_2)}{\partial x_1} &=& -\frac{\beta x^\beta_2}{x_1(x^\beta_1+x^\beta_2)},\\
    &\frac{\partial\mathcal{L}_{DPO}(x_1;x_2)}{\partial x_2} &=& \frac{\beta x^{\beta-1}_2}{x^\beta_1+x^\beta_2}.
\end{aligned}
    \right.
% \label{eq:g_dpo}
\end{equation}
% \label{thm1}
\end{thm}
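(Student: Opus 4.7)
The plan is to treat this as a direct calculus exercise, since the claim reduces to computing two partial derivatives of a simple composition of elementary functions. First I would simplify the expression by pushing the logarithm through the quotient and pulling out the exponent, writing
\begin{equation*}
\mathcal{L}_{DPO}(x_1,x_2) = -\log\!\left(\frac{x_1^\beta}{x_1^\beta+x_2^\beta}\right) = -\beta\log x_1 + \log(x_1^\beta+x_2^\beta).
\end{equation*}
This decomposition separates the two occurrences of $x_1$ in the original expression and reduces the problem to differentiating a logarithm and a power, so that no quotient rule is needed.

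Next, I would differentiate with respect to $x_1$. The first term contributes $-\beta/x_1$, and the second term contributes $\beta x_1^{\beta-1}/(x_1^\beta+x_2^\beta)$ by the chain rule. The key step is then to combine these two fractions over the common denominator $x_1(x_1^\beta+x_2^\beta)$, so that the numerator telescopes: $-\beta(x_1^\beta+x_2^\beta)+\beta x_1^\beta = -\beta x_2^\beta$, yielding the claimed formula. Differentiating with respect to $x_2$ is even simpler, since the first term vanishes and only a single chain-rule application to $\log(x_1^\beta+x_2^\beta)$ is needed.

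There is essentially no hard step here; the only minor pitfall is bookkeeping, namely ensuring that the algebraic simplification in the $x_1$-derivative is carried out over the correct common denominator so that the cancellation between $-\beta(x_1^\beta+x_2^\beta)$ and $+\beta x_1^\beta$ is visible. Once that cancellation is made, the two expressions in Equation (\ref{eq:g_dpo}) follow immediately, completing the proof.
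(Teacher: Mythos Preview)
Your proposal is correct. Both your argument and the paper's are straightforward calculus computations, but they are organized slightly differently. The paper differentiates the original expression $-\log\bigl(x_1^\beta/(x_1^\beta+x_2^\beta)\bigr)$ directly, applying the chain rule to the outer logarithm and then the quotient rule to the inner fraction, which produces an intermediate expression with a factor $(x_1^\beta+x_2^\beta)/x_1^\beta$ multiplying a two-term bracket that is then simplified. You instead first rewrite the loss as $-\beta\log x_1+\log(x_1^\beta+x_2^\beta)$ and differentiate term by term, which sidesteps the quotient rule entirely and makes the cancellation in the $x_1$-derivative a single common-denominator step. Your route is marginally cleaner algebraically; the paper's is slightly more mechanical. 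Either way the result is immediate and there is no substantive gap.
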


\begin{proof}

    For $\frac{\partial\mathcal{L}_{DPO}(x_1;x_2)}{\partial x_1}$,
    \begin{equation}
    \begin{aligned}
    &\frac{\partial\mathcal{L}_{DPO}(x_1;x_2)}{\partial x_1} &=& - \frac{x^\beta_1 +x^\beta_2}{x^\beta_1} (\frac{\beta x^{\beta - 1}_1}{x^\beta_1 +x^\beta_2}+\frac{-\beta x^{2\beta-1}_1}{(x^\beta_1+x^\beta_2)^2})\\
    & &=& - \frac{\beta x^{\beta-1}_1(x^\beta_1+x^\beta_2)-\beta x^{2\beta-1}_1}{x^\beta_1(x^\beta_1+x^\beta_2)}\\
    & &=& - \frac{\beta x_1^{\beta-1} x^\beta_2}{x_1^\beta(x^\beta_1+x^\beta_2)}\\
    & &=& - \frac{\beta x^\beta_2}{x_1(x^\beta_1+x^\beta_2)}.\\
    \end{aligned}
    % \label{eq:g_dpo}
    \end{equation}

    For $\frac{\partial\mathcal{L}_{DPO}(x_1;x_2)}{\partial x_2}$,
    \begin{equation}
    \begin{aligned}
    &\frac{\partial\mathcal{L}_{DPO}(x_1;x_2)}{\partial x_2} &=& \frac{1}{x^\beta_1 +x^\beta_2} \beta x^{\beta-1}_2\\
    & &=& \frac{\beta x^{\beta-1}_2}{x^\beta_1+x^\beta_2}.\\
    \end{aligned}
    % \label{eq:g_dpo}
    \end{equation}

\end{proof}

\end{document}